\title{SVGD: A Virtual Gradients Descent Method for Stochastic Optimization}
\author[a]{Zheng Li}
\author[,a,b]{Shi Shu\thanks{
		Corresponding author \newline
		This work is supported by the National Natural Science Foundation of China (Grant No. 11571293) and Key Research and Development Program of Hunan Province, P. R. China (Grant No. 2017SK2014 ). \newline
		Email addresses: lizheng.math.ai@gmail.com (Zheng Li),	shushi@xtu.edu.cn (Shi Shu)
	}
}
\affil[a]{\textit{School of Mathematics and Computational Science, Xiangtan University, Xiangtan, Hunan, 411105, China}}
\affil[b]{\textit{Hunan Key Laboratory for Computation and Simulation in Science and Engineering, Xiangtan University, Xiangtan 411105, China}}
\begin{document}
\maketitle

\begin{abstract}
Inspired by dynamic programming, we propose Stochastic Virtual Gradient Descent (SVGD) algorithm where the Virtual Gradient is defined by computational graph and automatic differentiation. The method is computationally efficient and has little memory requirements. We also analyze the theoretical convergence properties and implementation of the algorithm. Experimental results on multiple datasets and network models show that SVGD has advantages over other stochastic optimization methods.
\end{abstract}
\keywords{computational graph \and automatic differentiation \and stochastic optimization \and machine learning}
\section{Introduction} 
\label{sec:Introduction}
Stochastic gradient-based optimization is most widely used in many fields of science and engineering. In recent years, many scholars have compared SGD\cite{saad1998online} with some adaptive learning rate optimization methods\cite{hinton2012neural,kingma2014adam}. \cite{wilson2017marginal} shows that adaptive methods often display faster initial progress on the training set, but their performance quickly plateaus on the development/test set. Therefore, many excellent models \cite{girshick2014rich,ren2015faster,xie2017aggregated} still use SGD for training. However, SGD is greedy for the objective function with many multi-scale local convex regions (cf. Figure 1 of \cite{zhang2017hitting} or Fig. \ref{fig:Sequence}, left) because the negative of the gradient may not point to the minimum point on coarse-scale. Thus, the learning rate of SGD is difficult to set and significantly affects model performance\cite{Goodfellow-et-al-2016}.

Unlike greedy methods, dynamic programming (DP) \cite{cormen2009introduction} converges faster by solving simple sub-problems that decomposed from the original problem. Inspired by this, we propose the \textbf{virtual gradient} to construct a stochastic optimization method that combines the advantages of SGD and adaptive learning rate methods.

Consider a general objective function with the following composite form:
\begin{equation}
J = F(\boldsymbol{\sigma}), \quad \boldsymbol{\sigma} = \boldsymbol{f} (\boldsymbol{\theta}) \in \Omega_{\boldsymbol{\sigma}},
\label{composite-function}
\end{equation}
where $\boldsymbol{\theta} \in \Omega_{\boldsymbol{\theta}} = \mathbb{R}^{n}, \Omega_{\boldsymbol{\sigma}} = \boldsymbol{f}(\Omega_{\boldsymbol{\theta}}) \subseteq \mathbb{R}^{m}$, functions $F$ and each component function of $\boldsymbol{f}$ is first-order differentiable. 

We note that:
\begin{equation}
F(\boldsymbol{\sigma}^{*}) = F(\boldsymbol{f}(\boldsymbol{\theta}^{*})), 
\quad \boldsymbol{\sigma}^* = \mathop{\arg\min}_{\boldsymbol{\sigma} \in \Omega_{\boldsymbol{\sigma}}}\ F(\boldsymbol{\sigma}),
\quad \boldsymbol{\theta}^* = \mathop{\arg\min}_{\boldsymbol{\theta} \in \Omega_{\boldsymbol{\theta}}}\ F(\boldsymbol{f}( \boldsymbol{\theta})).
\end{equation}
In addition, when we minimize $F(\boldsymbol{\sigma})$ and $F(\boldsymbol{f}(\boldsymbol{\theta}))$ with the same iterative method, the former should converge faster because the structure of $F$ is simpler than $F \circ \boldsymbol{f}$. Based on these facts, we construct sequences $\{ \boldsymbol{\sigma}^{(t)} \}$ and $\{ \boldsymbol{\theta}^{(t)} \}$ that converge to $\boldsymbol{\sigma}^{*}$ and $\boldsymbol{\theta}^{*}$, respectively, with equations:
\begin{equation}
\boldsymbol{\sigma}^{(t)} = \boldsymbol{f}(\boldsymbol{\theta}^{(t)}), t=0, 1, \cdots.
\label{condition}
\end{equation}
Fig. \ref{fig:Sequence} (right) shows the relationship between $\{ \boldsymbol{\sigma}^{(t)} \}$ and $\{ \boldsymbol{\theta}^{(t)} \}$. The sequence $\{ \boldsymbol{\sigma}^{(t)} \}$ can be obtained by using first-order iterative methods (see Sec.\ref{sec:Related_Work} for details):
\begin{equation}
\boldsymbol{\sigma}^{(t + 1)} = \boldsymbol{\sigma}^{(t)} -\alpha \mathscr{T}^{*} \nabla_{\boldsymbol{\sigma}} J \big|_{\boldsymbol{\sigma} = \boldsymbol{\sigma}^{(t)}}, 
\label{first-order-formula-for-sigma}
\end{equation}
where $\alpha$ is the learning rate, $\mathscr{T}^*$ is an operator of mappping $\mathbb{R}^m \rightarrow \mathbb{R}^m$. 
\begin{figure}[H]
	\centering
	\includegraphics[width=0.8\textwidth, angle=0]{./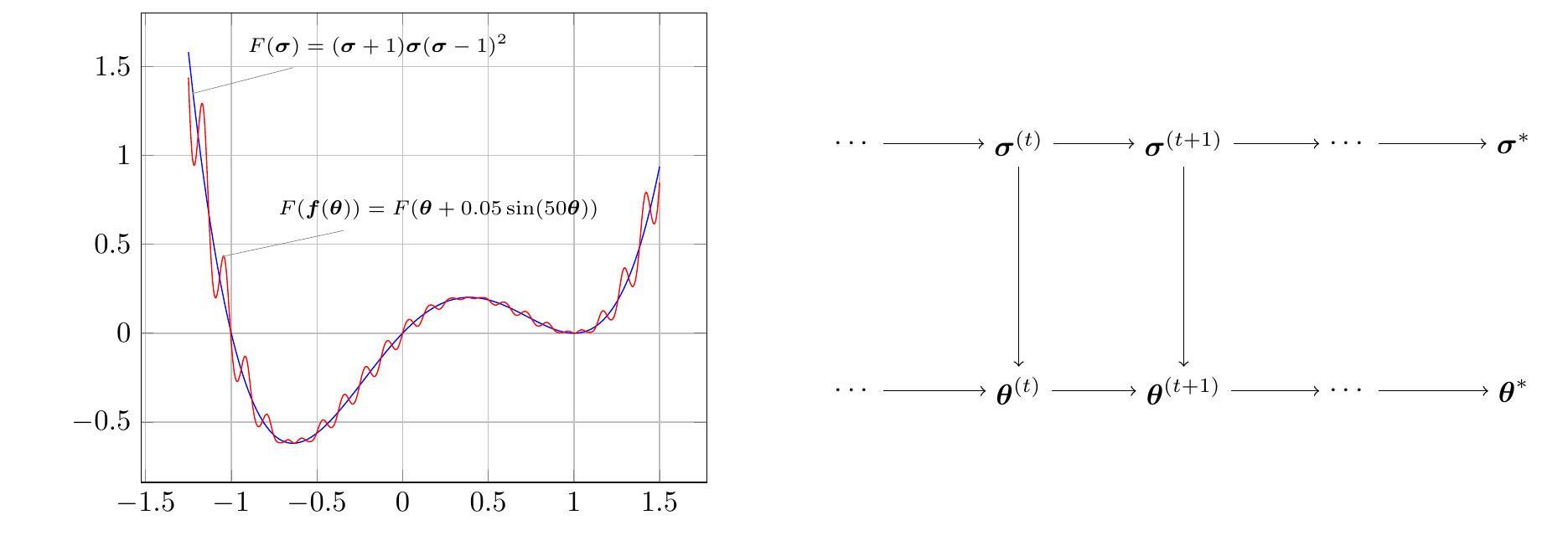}
	\caption{}
	\label{fig:Sequence}
\end{figure}

The difficulty in constructing operator $\mathscr{T}^*$ is how to make the condition (\ref{condition}) holds true. Let $\boldsymbol{M} = \left( \frac{\partial \boldsymbol{f}_i (\boldsymbol{\theta})}{\partial \boldsymbol{\theta}_j} \Big|_{\boldsymbol{\theta} = \boldsymbol{\theta}^{(t)}} \right)_{m \times n}$, $\mathscr{T}$ is an operator of mapping $\mathbb{R}^m \rightarrow \mathbb{R}^m$, we give the following iterations:
\begin{equation}
\boldsymbol{\sigma}^{(t + 1)} = \boldsymbol{\sigma}^{(t)} -\alpha \boldsymbol{M} \boldsymbol{M}^T \mathscr{T} \nabla_{\boldsymbol{\sigma}} J \big|_{\boldsymbol{\sigma} = \boldsymbol{\sigma}^{(t)}} ,
\label{first-order-formula-for-theta-sigma}
\end{equation}
\begin{equation}
\boldsymbol{\theta}^{(t + 1)} = \boldsymbol{\theta}^{(t)} -\alpha \boldsymbol{M}^T \mathscr{T} \nabla_{\boldsymbol{\sigma}} J \big|_{\boldsymbol{\theta} = \boldsymbol{\theta}^{(t)}} .
\label{first-order-formula-for-theta}
\end{equation}
Since $\boldsymbol{M}^T \mathscr{T} \nabla_{\boldsymbol{\sigma}} J$ in Eqn.(\ref{first-order-formula-for-theta}) is equivalent to the position of $\nabla_{\boldsymbol{\theta}} J$ in gradient descent method, we define $\boldsymbol{M}^T \mathscr{T} \nabla_{\boldsymbol{\sigma}} J$ as the \textbf{virtual gradient} of function $J$ for variable $\boldsymbol{\theta}$.

For Eqn.(\ref{first-order-formula-for-theta}), it is easy to prove that the condition (\ref{condition}) holds when $\boldsymbol{f}$ is a linear mapping. If $\boldsymbol{f}$ is a nonlinear mapping, let the second-derivatives of $\boldsymbol{f}$ be bounded and $\alpha = o(1), \boldsymbol{\sigma}^{(t)} = \boldsymbol{f}(\boldsymbol{\theta}^{(t)})$, owing to (\ref{first-order-formula-for-theta-sigma}) and (\ref{first-order-formula-for-theta}) and Taylor formula, the following holds true:
\begin{align}
\begin{autobreak}
|| \boldsymbol{\sigma}^{(t+1)} - \boldsymbol{f}(\boldsymbol{\theta}^{(t+1)}) || 
= || (\boldsymbol{\sigma}^{(t)} -\alpha \boldsymbol{M} \boldsymbol{M}^T \mathscr{T} \nabla_{\boldsymbol{\sigma}} J \big|_{\boldsymbol{\sigma} = \boldsymbol{\sigma}^{(t)}}) - \boldsymbol{f}(\boldsymbol{\theta}^{(t)} -\alpha \boldsymbol{M}^T \mathscr{T} \nabla_{\boldsymbol{\sigma}} J \big|_{\boldsymbol{\theta} = \boldsymbol{\theta}^{(t)}}) || 
= O(||\alpha \boldsymbol{M}^T \mathscr{T} \nabla_{\boldsymbol{\sigma}} J \big|_{\boldsymbol{\theta} = \boldsymbol{\theta}^{(t)}} ||_2^2) 
= O(\alpha^2),
\end{autobreak}
\label{taylor}
\end{align}
In this case, the condition (\ref{condition}) holds, approximately.

According to the analysis above, the sequence $\{F(\boldsymbol{f}(\boldsymbol{\theta}^{(t)}))\}$ yields similar convergence as $\{F(\boldsymbol{\sigma}^{(t)})\}$ in Eqn.(\ref{first-order-formula-for-theta}) and Eqn.(\ref{first-order-formula-for-theta-sigma}), but faster than minimizing the function $F(\boldsymbol{f}(\boldsymbol{\theta}))$ with the same first-order method, directly.

Note that the iterative method (\ref{first-order-formula-for-theta}) is derived based on the composite form (\ref{composite-function}) and this form is generally not unique, it is inconvenient for our algorithm design. We begin by introducing the computational graph. It is a directed graph, where each node indicates a variable that may be a scalar, vector, matrix, tensor, or even a variable of another type, and each edge unique corresponds to an operation which maps a node to another. We sometimes annotate the output node with the name of the operation applied. In particular, the computational graph corresponding to the objective function is a DAG(directed acyclic graphs) \cite{thulasiraman1992graphs}. For example, the computational graph of the objective function $J$ shown in Fig. \ref{fig:Computational-Graphs-Forward} (a), the corresponding composite form (\ref{composite-function}) is:
\begin{align}
\left\{
\begin{array}{l}
J = F(\boldsymbol{\sigma}), \boldsymbol{\sigma} 
= \begin{bmatrix}\tilde{\boldsymbol{\sigma}}_1 \\ \tilde{\boldsymbol{\sigma}}_2 \end{bmatrix}
= \boldsymbol{f}(\boldsymbol{\theta})
= \begin{bmatrix}
\tilde{\boldsymbol{f}}_1(\tilde{\boldsymbol{\theta}}_1, \tilde{\boldsymbol{\theta}}_2) \\
\tilde{\boldsymbol{f}}_2(\tilde{\boldsymbol{\theta}}_2, \tilde{\boldsymbol{\theta}}_3; \tilde{\boldsymbol{\sigma}}_1)
\end{bmatrix}, 
\quad \boldsymbol{\theta} = [\tilde{\boldsymbol{\theta}}_1^T, \tilde{\boldsymbol{\theta}}_2^T, \tilde{\boldsymbol{\theta}}_3^T]^T \\
\tilde{\boldsymbol{f}_1}: \mathbb{R}^{n_1^{\boldsymbol{\theta}}} \times \mathbb{R}^{n_2^{\boldsymbol{\theta}}} \rightarrow \mathbb{R}^{n_1^{\boldsymbol{\sigma}}}, \quad \tilde{\boldsymbol{f}_2}: \mathbb{R}^{n_2^{\boldsymbol{\theta}}} \times \mathbb{R}^{n_3^{\boldsymbol{\theta}}} \times \mathbb{R}^{n_1^{\boldsymbol{\sigma}}} \rightarrow \mathbb{R}^{n_2^{\boldsymbol{\sigma}}} \\
\end{array}
\ \right. . \label{composite-function-instance}
\end{align} 
\begin{figure}[H]
	\centering
	\includegraphics[width=0.9\textwidth, angle=0]{./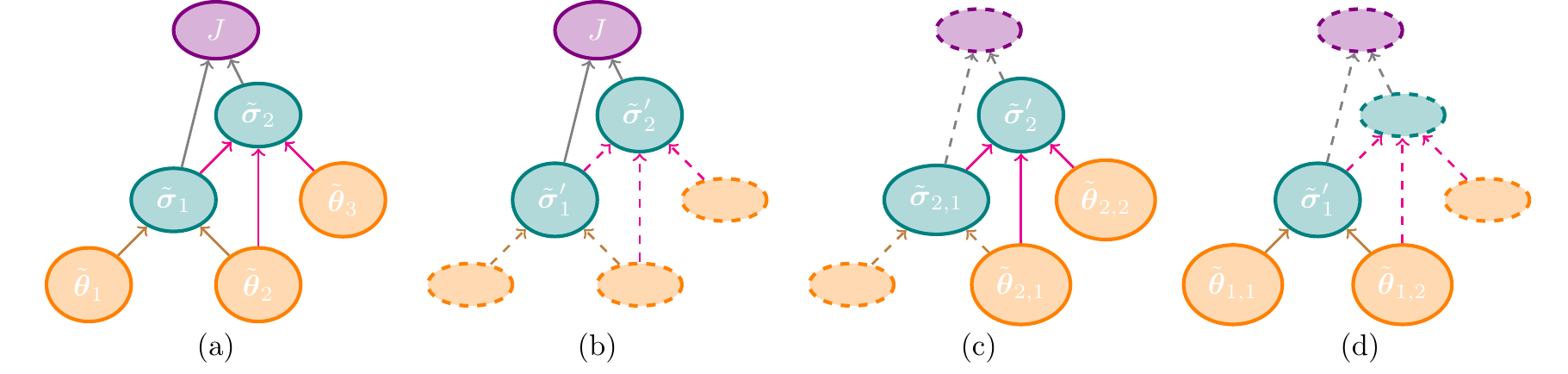}
	\caption{
		\protect\tikz \protect\draw (0,0) node[ellipse,minimum height=2mm,minimum width=5mm,draw=orange!100,fill=orange!30,very thick] { };,
		\protect\tikz \protect\draw (0,0) node[ellipse,minimum height=2mm,minimum width=5mm,draw=teal!100,fill=teal!30,very thick] { };,
		\protect\tikz \protect\draw (0,0) node[ellipse,minimum height=2mm,minimum width=5mm,draw=violet!100,fill=violet!30,very thick] { };
		are nodes associated with leaf values, hidden values and output value, and  
		\protect\tikz \protect\draw[brown,thick,->](0,0) -- (5mm,0); ,
		\protect\tikz \protect\draw[magenta,thick,->](0,0) -- (5mm,0); , 
		\protect\tikz \protect\draw[gray,thick,->](0,0) -- (5mm,0); 
		are edges associated with operations $\tilde{\boldsymbol{f}}_1, \tilde{\boldsymbol{f}}_2, F$. 
	}
	\label{fig:Computational-Graphs-Forward}
\end{figure}
For a given general objective function, let $\boldsymbol{G}$ correspond to a computational graph that maps the set of leaf values $\boldsymbol{V}_{\boldsymbol{\theta}}^{\boldsymbol{G}} = \{ \tilde{\boldsymbol{\theta}}_j | j =1, \cdots, N\}$ to the output value $J$, where the set of hidden values is $\boldsymbol{V}_{\boldsymbol{\sigma}}^{\boldsymbol{G}} = \{ \tilde{\boldsymbol{\sigma}}_i | i =1, \cdots, M\}$. Let $\boldsymbol{V}_{\boldsymbol{\theta} \rightarrow \boldsymbol{\sigma}}^{\boldsymbol{G}} := \{ \tilde{\boldsymbol{\sigma}}'_i | i =1, \cdots, M'\} = \{ \tilde{\boldsymbol{\sigma}}_i \in \boldsymbol{V}_{\boldsymbol{\sigma}}^{\boldsymbol{G}} |  \mathrm{dist}(\tilde{\boldsymbol{\sigma}}_i, \boldsymbol{V}_{\boldsymbol{\theta}}^{\boldsymbol{G}}) = 1 \}$. In this paper, the objective function $J$ in Eqn.(\ref{composite-function}) will be expressed as the following composite form:
\begin{equation}
J = F(\boldsymbol{\sigma}), \boldsymbol{\sigma} 
= \begin{bmatrix}\tilde{\boldsymbol{\sigma}}'_1 \\ \vdots \\ \tilde{\boldsymbol{\sigma}}'_{M'}\end{bmatrix}
= \tilde{\boldsymbol{f}}(\boldsymbol{\theta})=\begin{bmatrix}
\tilde{\boldsymbol{f}}_1(\tilde{\boldsymbol{\theta}}_{1,1}, \cdots, \tilde{\boldsymbol{\theta}}_{1,N_1}; \tilde{\boldsymbol{\sigma}}'_{1,1}, \cdots, \tilde{\boldsymbol{\sigma}}'_{1,N'_1}) \\
\vdots \\
\tilde{\boldsymbol{f}}_{M'}(\tilde{\boldsymbol{\theta}}_{M',1}, \cdots, \tilde{\boldsymbol{\theta}}_{M',N_{M'}}; \tilde{\boldsymbol{\sigma}}'_{M',1}, \cdots, \tilde{\boldsymbol{\sigma}}'_{M',N'_{M'}})
\end{bmatrix}
\label{graph-decomposition}
\end{equation}
where $\tilde{\boldsymbol{\sigma}}'_i = \tilde{\boldsymbol{f}}_i(\tilde{\boldsymbol{\theta}}_{i,1}, \cdots, \tilde{\boldsymbol{\theta}}_{i,N_i}; \tilde{\boldsymbol{\sigma}}'_{i,1}, \cdots, \tilde{\boldsymbol{\sigma}}'_{i,N'_i}) \in \boldsymbol{V}_{\boldsymbol{\theta} \rightarrow \boldsymbol{\sigma}}^{\boldsymbol{G}}$, $i=1, \cdots, M'$, and 
\begin{equation*}
\left\{
\begin{array}{l}
\{ \tilde{\boldsymbol{\theta}}_{i,1}, \cdots, \tilde{\boldsymbol{\theta}}_{i,N_i} \} = \{ \tilde{\boldsymbol{\theta}}_k \in \boldsymbol{V}_{\boldsymbol{\theta}}^{\boldsymbol{G}} |\ \mathrm{dist}(\tilde{\boldsymbol{\theta}}_k, \tilde{\boldsymbol{\sigma}}'_i) = 1 \} \\
\{ \tilde{\boldsymbol{\sigma}}'_{i,1}, \cdots, \tilde{\boldsymbol{\sigma}}'_{i,N'_i} \} = \{ \tilde{\boldsymbol{\sigma}}_k \in \boldsymbol{V}_{\boldsymbol{\sigma}}^{\boldsymbol{G}} |\ \mathrm{dist}(\tilde{\boldsymbol{\sigma}}_k, \tilde{\boldsymbol{\sigma}}'_i) = 1, \tilde{\boldsymbol{\sigma}}_k \preceq \tilde{\boldsymbol{\sigma}}'_i \}
\end{array}
\ \right. .
\end{equation*}
For example, Eqn.(\ref{composite-function-instance}) can be expressed as:
\begin{align*}
J = F(\boldsymbol{\sigma}), \boldsymbol{\sigma} 
= \begin{bmatrix}\tilde{\boldsymbol{\sigma}}'_1 \\ \tilde{\boldsymbol{\sigma}}'_2 \end{bmatrix}
= \tilde{\boldsymbol{f}}(\boldsymbol{\theta})
= \begin{bmatrix}
\tilde{\boldsymbol{f}}_1(\tilde{\boldsymbol{\theta}}_{1,1}, \tilde{\boldsymbol{\theta}}_{1,2}) \\
\tilde{\boldsymbol{f}}_2(\tilde{\boldsymbol{\theta}}_{2,1}, \tilde{\boldsymbol{\theta}}_{2,2}; \tilde{\boldsymbol{\sigma}}_{2,1})
\end{bmatrix},
\end{align*} 
where 
\begin{align*}
\left\{
\begin{array}{l}
\tilde{\boldsymbol{\sigma}}'_1 = \tilde{\boldsymbol{\sigma}}_1, \tilde{\boldsymbol{\theta}}_{1,1} = \tilde{\boldsymbol{\theta}}_1,
\tilde{\boldsymbol{\theta}}_{1,2} = \tilde{\boldsymbol{\theta}}_2 \\
\tilde{\boldsymbol{\sigma}}'_2 = \tilde{\boldsymbol{\sigma}}_2,
\tilde{\boldsymbol{\sigma}}_{2,1} = \tilde{\boldsymbol{\sigma}}_1, \tilde{\boldsymbol{\theta}}_{2,1} = \tilde{\boldsymbol{\theta}}_2,
\tilde{\boldsymbol{\theta}}_{2,2} = \tilde{\boldsymbol{\theta}}_3
\end{array}
\ \right. . 
\end{align*} 
In deeping learning, the gradient of the objective function is usually calculated by the Automatic Differentiation (AD) technique\cite{baydin2018automatic,Goodfellow-et-al-2016}. Our following example introduces how to calculate the gradient of $J$ in Eqn.(\ref{composite-function-instance}) using AD technique.
\begin{enumerate}
	\item Find the Operation $F$ associated with output value $J$ and its input node $\{\tilde{\boldsymbol{\sigma}}'_1, \tilde{\boldsymbol{\sigma}}'_2 \}$, cf. Fig. \ref{fig:Computational-Graphs-Forward} (b). Then, calculate the following gradients:
	\begin{equation*}
		\boldsymbol{g}_{\boldsymbol{w}}^{J}
		:= \boldsymbol{g}_{\boldsymbol{w} \rightarrow J} (\tilde{\boldsymbol{\sigma}}'_1, \tilde{\boldsymbol{\sigma}}'_2)
		= \nabla_{\boldsymbol{w}} F, \quad \boldsymbol{w} = \tilde{\boldsymbol{\sigma}}'_1, \tilde{\boldsymbol{\sigma}}'_2.
	\end{equation*}
	\item Perform the following steps by the partial order $\succeq$ of $\{\tilde{\boldsymbol{\sigma}}'_1, \tilde{\boldsymbol{\sigma}}'_2 \}$:
	\begin{enumerate}
		\item Find Operation $\tilde{\boldsymbol{f}}_2$ and it's input nodes $\{\tilde{\boldsymbol{\theta}}_{2,1}, \tilde{\boldsymbol{\theta}}_{2,2}, \tilde{\boldsymbol{\sigma}}'_{2,1} \}$ which associated with hidden value $\tilde{\boldsymbol{\sigma}}'_2$, cf. Fig. \ref{fig:Computational-Graphs-Forward} (c). Let:
		\begin{equation*}
			F_2(\tilde{\boldsymbol{\theta}}_{2,1}, \tilde{\boldsymbol{\theta}}_{2,2}; \tilde{\boldsymbol{\sigma}}'_{2,1}) := F(\cdot, \tilde{\boldsymbol{f}}_2(\tilde{\boldsymbol{\theta}}_{2,1}, \tilde{\boldsymbol{\theta}}_{2,2}; \tilde{\boldsymbol{\sigma}}'_{2,1})),
		\end{equation*}
		where '$\cdot$' denotes that it is treated as a constant during the calculation of gradients and will not be declared later. Calculate the following gradients:
		\begin{equation*}
			\boldsymbol{g}_{\boldsymbol{w}}^{\tilde{\boldsymbol{\sigma}}'_2} := \boldsymbol{g}_{\boldsymbol{w} \rightarrow \tilde{\boldsymbol{\sigma}}'_2}(\nabla_{\tilde{\boldsymbol{\sigma}}'_2} J; \tilde{\boldsymbol{\theta}}_{2,1}, \tilde{\boldsymbol{\theta}}_{2,2}; \tilde{\boldsymbol{\sigma}}'_{2,1}) = \nabla_{\boldsymbol{w}} F_2 =
			\left( \frac{\partial \tilde{\boldsymbol{\sigma}}'_2}{\partial \boldsymbol{w}} \right)^T \nabla_{\tilde{\boldsymbol{\sigma}}'_2} J, \quad \boldsymbol{w} = \tilde{\boldsymbol{\theta}}_{2,1}, \tilde{\boldsymbol{\theta}}_{2,2}, \tilde{\boldsymbol{\sigma}}'_{2,1},
		\end{equation*}
		where $\nabla_{\tilde{\boldsymbol{\sigma}}'_2} J = \nabla_{\tilde{\boldsymbol{\sigma}}'_2} F$.
		\item Find Operation $\tilde{\boldsymbol{f}}_1$ and it's input nodes $\{\tilde{\boldsymbol{\theta}}_{1,1}, \tilde{\boldsymbol{\theta}}_{1,2} \}$ which associated with hidden value $\tilde{\boldsymbol{\sigma}}'_1$, cf. Fig. \ref{fig:Computational-Graphs-Forward} (d). Let:
		\begin{equation*}
			F_1(\tilde{\boldsymbol{\theta}}_{1,1}, \tilde{\boldsymbol{\theta}}_{1,2}) := F(\tilde{\boldsymbol{f}}_1(\tilde{\boldsymbol{\theta}}_{1,1}, \tilde{\boldsymbol{\theta}}_{1,2}), \tilde{\boldsymbol{f}}_2(\cdot, \cdot, \tilde{\boldsymbol{f}}_1(\tilde{\boldsymbol{\theta}}_{1,1}, \tilde{\boldsymbol{\theta}}_{1,2}))).
		\end{equation*}
		Calculate following gradients:
		\begin{equation*}
			\boldsymbol{g}_{\boldsymbol{w}}^{\tilde{\boldsymbol{\sigma}}'_1} := \boldsymbol{g}_{\boldsymbol{w} \rightarrow \tilde{\boldsymbol{\sigma}}'_1}(\nabla_{\tilde{\boldsymbol{\sigma}}'_1} J; \tilde{\boldsymbol{\theta}}_{1,1}, \tilde{\boldsymbol{\theta}}_{1,2}) = \nabla_{\boldsymbol{w}} F_1 =
			\left( \frac{\partial \tilde{\boldsymbol{\sigma}}'_1}{\partial \boldsymbol{w}} \right)^T \nabla_{\tilde{\boldsymbol{\sigma}}'_1} J, \quad \boldsymbol{w} = \tilde{\boldsymbol{\theta}}_{1,1}, \tilde{\boldsymbol{\theta}}_{1,2},
		\end{equation*}
		where $\nabla_{\tilde{\boldsymbol{\sigma}}'_1} J = \nabla_{\tilde{\boldsymbol{\sigma}}'_1} F + \nabla_{\tilde{\boldsymbol{\sigma}}'_1} F_2$.
	\end{enumerate}
	\item Calculate the gradients of $J$:
	\begin{equation*}
		\nabla_{\tilde{\boldsymbol{\theta}}_1} J = \boldsymbol{g}_{\tilde{\boldsymbol{\theta}}_{1,1}}^{\tilde{\boldsymbol{\sigma}}'_1}, \nabla_{\tilde{\boldsymbol{\theta}}_2} J = \boldsymbol{g}_{\tilde{\boldsymbol{\theta}}_{1,2}}^{\tilde{\boldsymbol{\sigma}}'_1} + \boldsymbol{g}_{\tilde{\boldsymbol{\theta}}_{2,1}}^{\tilde{\boldsymbol{\sigma}}'_2},
		\nabla_{\tilde{\boldsymbol{\theta}}_3} J = \boldsymbol{g}_{\tilde{\boldsymbol{\theta}}_{2,2}}^{\tilde{\boldsymbol{\sigma}}'_2}.
	\end{equation*}
\end{enumerate}
According to the analysis above, the computational graph of $\{\nabla_{\tilde{\boldsymbol{\theta}}_k} J|\ k=1, 2, 3\}$ can be shown as Fig. \ref{fig:Computational-Graphs-Backward} (a). If $\mathscr{T}$ is a broadcast-like operator, the computational graph of \textbf{vitrual gradients} can be shown as Fig. \ref{fig:Computational-Graphs-Backward} (b), where $z_1 = \boldsymbol{g}_{\tilde{\boldsymbol{\theta}}_2 \rightarrow \tilde{\boldsymbol{\sigma}}_1} (\mathscr{T} \nabla_{\tilde{\boldsymbol{\sigma}}_1} J; \tilde{\boldsymbol{\theta}}_1, \tilde{\boldsymbol{\theta}}_2) $, $
z_2 = \boldsymbol{g}_{\tilde{\boldsymbol{\theta}}_2 \rightarrow \tilde{\boldsymbol{\sigma}}_2} (\mathscr{T} \nabla_{\tilde{\boldsymbol{\sigma}}_2} J; \tilde{\boldsymbol{\theta}}_2, \tilde{\boldsymbol{\theta}}_3;\tilde{\boldsymbol{\sigma}}_1)$ and  $\{\nabla_{\tilde{\boldsymbol{\theta}}_k}^{(\boldsymbol{G}, \mathscr{T})} J|\ k=1, 2, 3\}$ is defined by the Eqn.(\ref{virtual-gradient}).
\begin{figure}[H]
	\centering
	\includegraphics[width=0.8\textwidth, angle=0]{./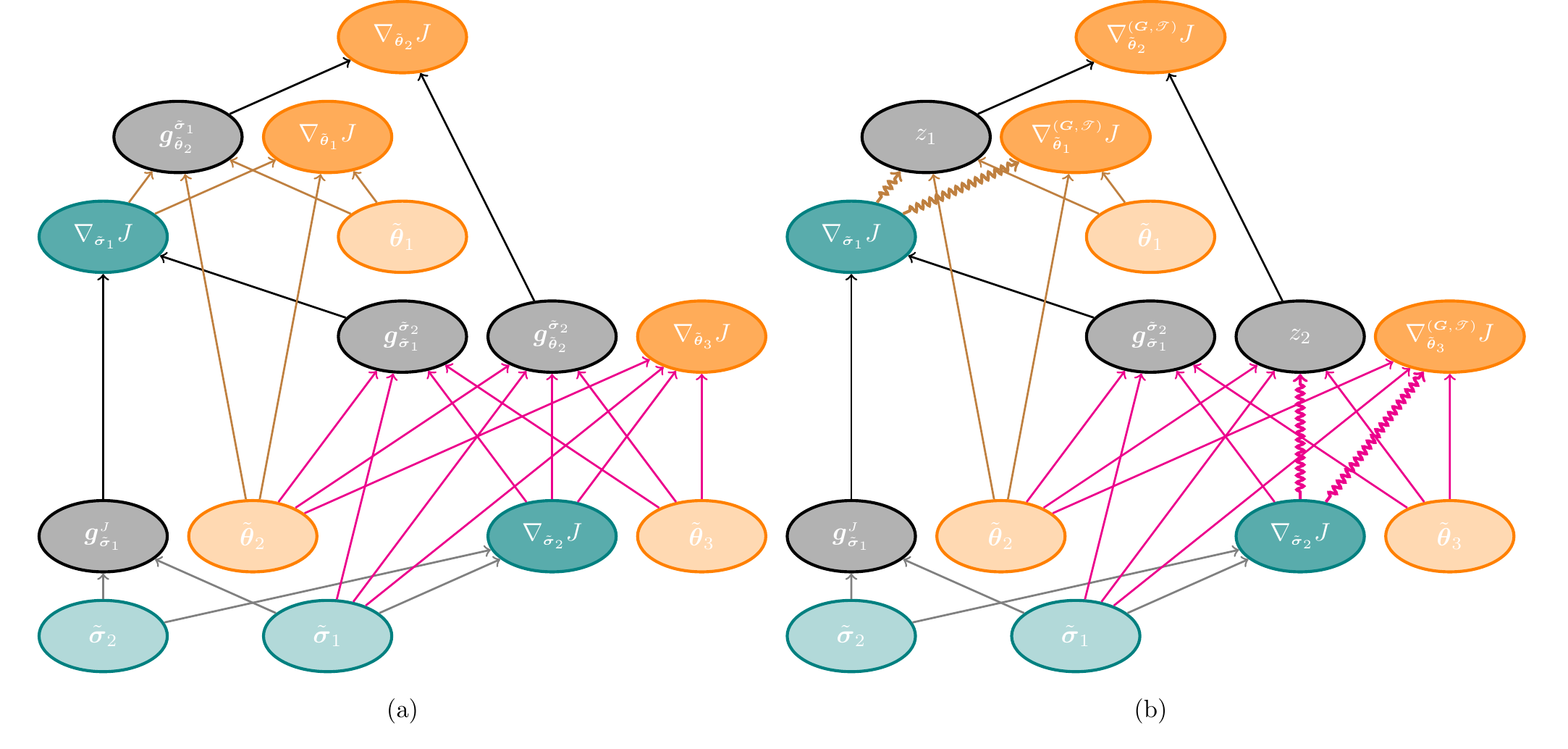}
	\caption{
		\protect\tikz \protect\draw[black,thick,->](0,0) -- (5mm,0); are edges associated with operation $+$ and 
		$x$ \protect\tikz \protect\draw[gray,very thick,decorate,decoration={pre length=1mm,post length=1mm, snake, amplitude=.5mm,segment length=1mm},->](0, 0)--(5mm, 0); $y$ denotes $\mathscr{T} x$ \protect\tikz \protect\draw[gray,very thick,->](0,0) -- (5mm,0); $y$.
	}
	\label{fig:Computational-Graphs-Backward}
\end{figure}
According to the definition of \textbf{virtual gradient}, for any $\tilde{\boldsymbol{\theta}}_k \in \boldsymbol{V}_{\boldsymbol{\theta}}^{\boldsymbol{G}}$:
\begin{align}
\nabla_{\tilde{\boldsymbol{\theta}}_k}^{(\boldsymbol{G}, \mathscr{T})} J 
&= \sum_{\tilde{\boldsymbol{\sigma}}'_i \in \boldsymbol{V}_{\boldsymbol{\theta} \rightarrow \boldsymbol{\sigma}}} \left( \frac{\partial \tilde{\boldsymbol{\sigma}}'_i}{\partial \tilde{\boldsymbol{\theta}}_k} \right)^T \mathscr{T}_{i} \nabla_{\tilde{\boldsymbol{\sigma}}'_i} J \nonumber \\
&= \sum_{\tilde{\boldsymbol{\sigma}}'_i \in \boldsymbol{V}_{\boldsymbol{\theta} \rightarrow \boldsymbol{\sigma}}} \sum_j \delta(\tilde{\boldsymbol{\theta}}_{i,j}, \tilde{\boldsymbol{\theta}}_k) \boldsymbol{g}_{\tilde{\boldsymbol{\theta}}_{i,j} \rightarrow \tilde{\boldsymbol{\sigma}}'_i} (\mathscr{T}_{i,j} \nabla_{\tilde{\boldsymbol{\sigma}}'_i} J; \tilde{\boldsymbol{\theta}}_{i,1}, \cdots, \tilde{\boldsymbol{\theta}}_{i,N_1}; \tilde{\boldsymbol{\sigma}}'_{i,1}, \cdots, \tilde{\boldsymbol{\sigma}}'_{i,N'_i}).
\label{virtual-gradient}
\end{align}
Obviously, $\nabla_{\tilde{\boldsymbol{\theta}}_k} J = \nabla_{\tilde{\boldsymbol{\theta}}_k}^{(\boldsymbol{G}, I)} J$ where $I$ is an identity operator. The \textbf{bprop} operation $\boldsymbol{g}_{\boldsymbol{w} \rightarrow \tilde{\boldsymbol{\sigma}}'_i}$ is uniquely determined by $\tilde{\boldsymbol{f}}_i$.

Then, the Eqn.(\ref{first-order-formula-for-theta}) can be written as the following virtual gradient descent iteration:
\begin{equation}
	\quad \tilde{\boldsymbol{\theta}}_k^{(t+1)} = \quad \tilde{\boldsymbol{\theta}}_k^{(t)} - \alpha \nabla_{\tilde{\boldsymbol{\theta}}_k}^{(\boldsymbol{G}, \mathscr{T})} J \big|_{\boldsymbol{\theta} = \boldsymbol{\theta}^{(t)}}, \quad \forall \tilde{\boldsymbol{\theta}}_k \in \boldsymbol{V}_{\boldsymbol{\theta}}^{\boldsymbol{G}}
	\label{virtual-gradient-descent}
\end{equation}
We prove that the SVGD (Alg. \ref{alg:SVGD}) has advantages over SGD, RMSProp and Adam in training speed and test accuracy by experiments on multiple network models and datasets.

In Sec.\ref{sec:Stochastic_Virtual_Gradients_Descent_Method} we describe the operator $\mathscr{T}$ and the SVGD algorithm of stochastic optimization. Sec.\ref{sec:Encapsulation} introduce two methods to encapsulate SVGD, and Sec.\ref{sec:Convergence_Analysis} provides a theoretical analysis of convergence. Sec.\ref{sec:Experiments} compares our method with other methods by experiments.
\section{Stochastic Virtual Gradients Descent Method}
\label{sec:Stochastic_Virtual_Gradients_Descent_Method}
In this section, we will use the \textbf{accumulate squared gradient} in the RMSProp to construct the operator $\mathscr{T}$. According to Eqn.(\ref{taylor}), Eqn.(\ref{condition}) holds when the mapping $\boldsymbol{f}$ is linear. Based on this fact, we designed the following SVGD algorithm. The functions and variables in the algorithm are given by Eqn.(\ref{graph-decomposition}) and Eqn.(\ref{virtual-gradient}).

\begin{algorithm}[H]
	\DontPrintSemicolon
	\SetAlgoLined
	\SetNoFillComment
	\LinesNotNumbered
	
	\KwInput{$\boldsymbol{G}$: computational graph associated with function $J^{(\tau)} = L(\hat{y} (\boldsymbol{x}^{(\tau)}, \boldsymbol{\theta}), y^{(\tau)})$}
	\KwInput{$\alpha$: Learning rate}
	\KwInput{$m$: Minibatch size}
	\KwInput{$s \in [0, +\infty)$: Scaling coefficient}
	\KwInput{$\boldsymbol{\theta}$: Initial parameter}
	
	\tcc{define operator $\nabla^{(\boldsymbol{G}, \mathscr{T})}$ before training}
	\For{$\tilde{\boldsymbol{\sigma}}'_i \in \boldsymbol{V}_{\boldsymbol{\theta} \rightarrow \boldsymbol{\sigma}}^{\boldsymbol{G}} $}    
	{ 
		$\boldsymbol{r}_i = \boldsymbol{0}$ 
		\tcp*[l]{Initialize gradient accumulation variable}
		\For{$j \in \{ 1, \cdots, N_i \} $}
		{
			\eIf{$\tilde{\boldsymbol{f}}_i$ about $\tilde{\boldsymbol{\theta}}'_{i,j}$ is linear}
			{
				$\boldsymbol{g}_{\tilde{\boldsymbol{\theta}}'_{i,j} \rightarrow \tilde{\boldsymbol{\sigma}}'_i} (\mathscr{T}_{i,j} \nabla_{\tilde{\boldsymbol{\sigma}}'_i} J^{(\tau)}; \cdots; \cdots) 
				= \boldsymbol{g}_{\tilde{\boldsymbol{\theta}}'_{i,j} \rightarrow \tilde{\boldsymbol{\sigma}}'_i} (s \frac{\nabla_{\tilde{\boldsymbol{\sigma}}'_i} J^{(\tau)}}{\sqrt{\boldsymbol{r}_i + \epsilon}}; \cdots; \cdots)$ 
				\tcp*[l]{define $\mathscr{T}_{i,j}$}
			}{
				$\boldsymbol{g}_{\tilde{\boldsymbol{\theta}}'_{i,j} \rightarrow \tilde{\boldsymbol{\sigma}}'_i} (\mathscr{T}_{i,j} \nabla_{\tilde{\boldsymbol{\sigma}}'_i} J^{(\tau)}; \cdots; \cdots) 
				= \boldsymbol{g}_{\tilde{\boldsymbol{\theta}}'_{i,j} \rightarrow \tilde{\boldsymbol{\sigma}}'_i} (\nabla_{\tilde{\boldsymbol{\sigma}}'_i} J^{(\tau)}; \cdots; \cdots)$ 
				\tcp*[l]{define $\mathscr{T}_{i,j}$}
			}
		}
	}
	
	\tcc{update $\boldsymbol{\theta}$}
	\While{$\boldsymbol{V}_{\boldsymbol{\theta}^{(t)}}^{\boldsymbol{G}}$ not converged}
	{
		Sample a minibatch of $m$ examples from the training set $\{ \boldsymbol{x}^{(1)}, \cdots, \boldsymbol{x}^{(m)} \}$ with corresponding targets $y^{(i)}$. \\
		\For{$\tilde{\boldsymbol{\sigma}}'_i \in \boldsymbol{V}_{\boldsymbol{\theta} \rightarrow \boldsymbol{\sigma}}^{\boldsymbol{G}} $ parallel} 
		{
			$\boldsymbol{r}_i \leftarrow \rho \boldsymbol{r}_i + (1 - \rho) \frac{1}{m} \sum\limits_{i=\tau}^{m} \left( \nabla_{\tilde{\boldsymbol{\sigma}}'_i} J^{(\tau)}  \right)^2 $ 
			\tcp*[l]{Accumulate squared gradient}
		}
		\For{$\tilde{\boldsymbol{\theta}}_j \in \boldsymbol{V}_{\boldsymbol{\theta}}^{\boldsymbol{G}}$ parallel} 
		{
			$\tilde{\boldsymbol{\theta}}_j \leftarrow \tilde{\boldsymbol{\theta}}_j - \alpha \frac{1}{m} \sum\limits_{i=\tau}^{m} \nabla_{\tilde{\boldsymbol{\theta}}_j}^{(\boldsymbol{G}, \mathscr{T})} J^{(\tau)} $ 
			\tcp*[l]{apply update}
		}
	}
	
	\caption{\textit{SVGD}, our proposed algorithm for stochastic optimization. $(\nabla J)^2$ indicates the elementwise square $\nabla J \odot \nabla J$. Good default settings for the tested machine learning problems are $\epsilon = 10^{-6}$ and $\rho = 0.9$. All operations are element-wise.}
	\label{alg:SVGD}
\end{algorithm}
SVGD works well in neural network training tasks (Fig.\ref{fig:MNIST-MLP}, \ref{fig:CIFAR10-VGG}, \ref{fig:CIFAR10-ResNet}), it has a relatively faster convergence rate and better test accuracy than SGD, RMSProp, and Adam. 

For the linear operation Conv2D \cite{lecun1989generalization} and matrix multiplication MatMul as follows:
\begin{equation*}
\left\{
\begin{array}{l}
\mathrm{Conv2D}: (\mathbb{R}^{N} \times \mathbb{R}^{H_{in}} \times \mathbb{R}^{W_{in}} \times \mathbb{R}^{C_{in}}, \mathbb{R}^{H_k} \times \mathbb{R}^{W_k} \times \mathbb{R}^{C_{in}} \times \mathbb{R}^{C_{out}}) \rightarrow \mathbb{R}^{N} \times \mathbb{R}^{H_{out}} \times \mathbb{R}^{W_{out}} \times \mathbb{R}^{C_{out}} \\
\mathrm{MatMul}: (\mathbb{R}^{N} \times \mathbb{R}^{C_{in}}, \mathbb{R}^{C_{in}} \times \mathbb{R}^{C_{out}}) \rightarrow \mathbb{R}^{N} \times \mathbb{R}^{C_{out}}
\end{array}
\ \right. ,
\end{equation*}
there are $\mathrm{Dim}(\boldsymbol{r}_{\mathrm{Conv2D}}) = H_{out} W_{out} C_{out} < H_k W_k C_{in} C_{out}$ and $\mathrm{Dim}(\boldsymbol{r}_{\mathrm{MatMul}}) = C_{out} < C_{in} C_{out}$. Thus, SVGD also has less memory requirements than RMSProp and Adam for deep neural networks. 

For the same stochastic objective function, the learning rate at timestep t in SVGD has the following relationship with the stepsize in the SGD and RMSProp:
\begin{equation*}
\alpha_{SVGD}(t) \approx \alpha_{SGD}(t), \quad s * \alpha_{SVGD}(t) \approx \alpha_{RMSProp}(t).
\end{equation*}
\section{Encapsulation}
\label{sec:Encapsulation}
In this section, we introduce two methods to generate the computational graph of virtual gradient. We begin by assumming that the objective function is $J$ (cf. Fig. \ref{fig:Encapsulation} (b)), the set  $\boldsymbol{V}_{\boldsymbol{\theta}\rightarrow\boldsymbol{\sigma}}^{\boldsymbol{G}} = \{ \tilde{\boldsymbol{\sigma}}'_1, \tilde{\boldsymbol{\sigma}}'_2 \}$ (cf. Fig. \ref{fig:Encapsulation} (a)), and the function used to construct the computational graph of gradients is "gradients", cf. Fig. \ref{fig:Encapsulation} (c).
\begin{figure}[H]
	\centering
	\includegraphics[width=0.8\textwidth, angle=0]{./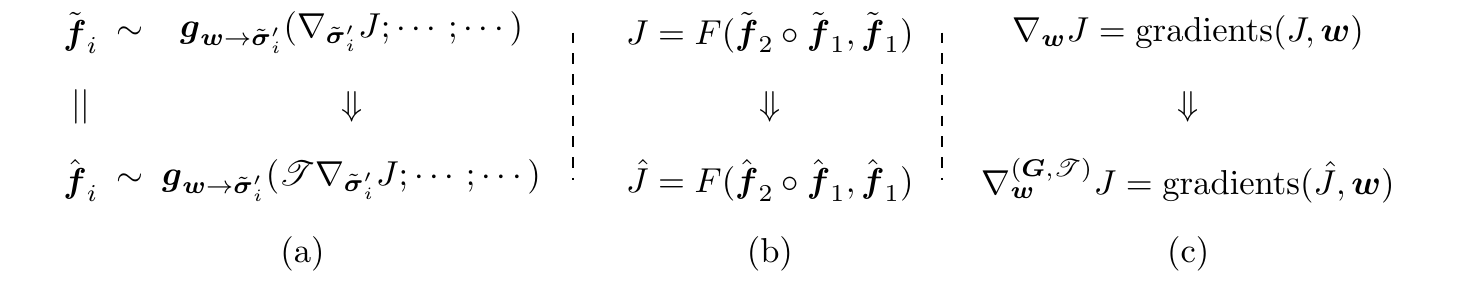}
	\caption{ }
	\label{fig:Encapsulation}
\end{figure}
We hope to generate the computational graph of virtual gradients by using the function "gradients", Fig. \ref{fig:Encapsulation} (c).
\subsection{Extend the API libraries}
As shown in Fig. \ref{fig:Encapsulation}, We begin by replacing $\tilde{\boldsymbol{f}}_i$ with $\hat{\boldsymbol{f}}_i$, where $\hat{\boldsymbol{f}}_i$ is a copy of $\tilde{\boldsymbol{f}}_i$ but corresponds to a new \textbf{bprob} operation. Then, call the function "gradients" to generate the computational graph of virtual gradients.

In order to achieve the idea above, in programming, we need to extending core libraries to customize new operations of $\hat{\boldsymbol{f}}_i$ and its \textbf{bprop} operation. Fig. \ref{fig:Encapsulation-1} shows that we need to extend 3 libraries in the layered architecture of TensorFlow \cite{abadi2016tensorflow}.
\begin{figure}[H]
	\centering
	\includegraphics[width=0.8\textwidth, angle=0]{./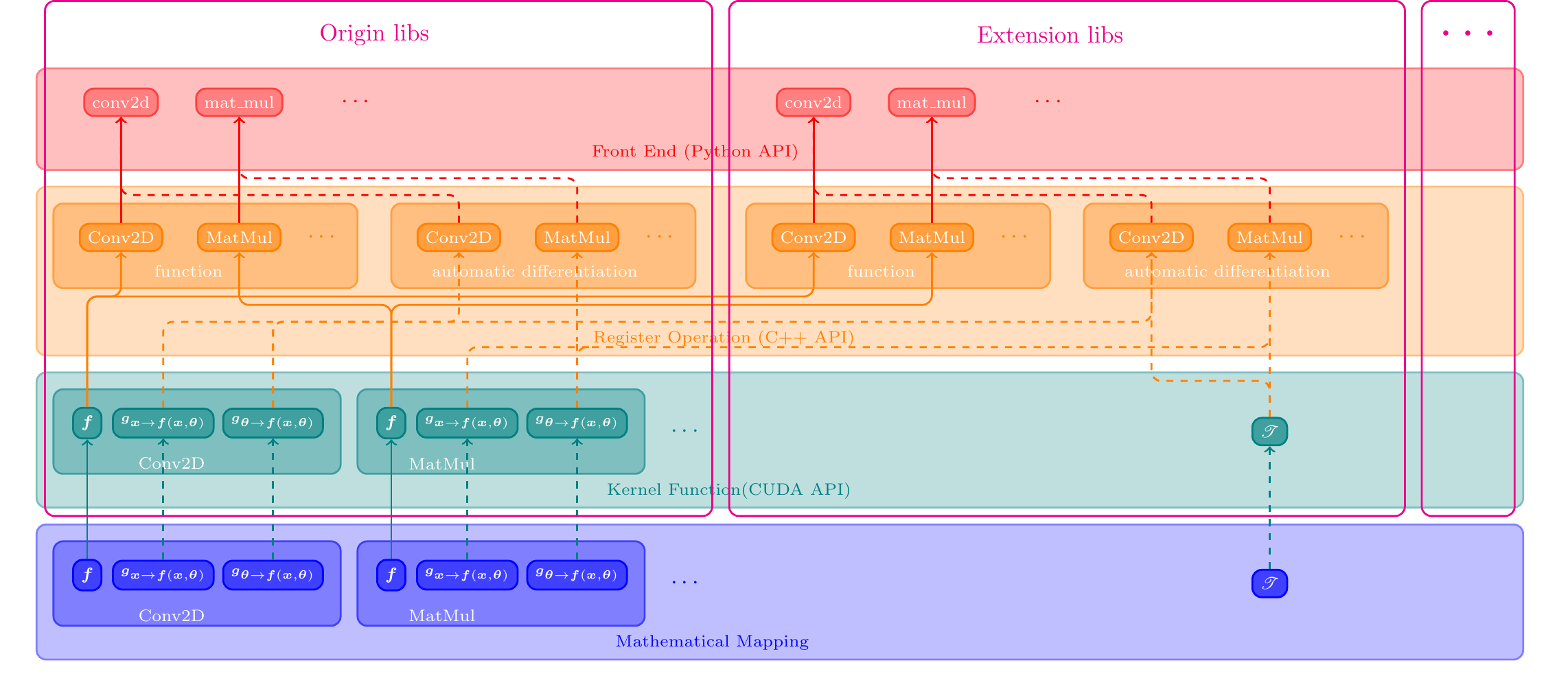}
	\caption{Layered architecture of Tensorflow}
	\label{fig:Encapsulation-1}
\end{figure}
\subsection{Modify the topology of the calculation graph}
According to Eqn.(\ref{virtual-gradient}) and Fig. \ref{fig:Computational-Graphs-Backward}, the computational graph of the virtual gradients can be obtained by adding new nodes on the computational graph of the gradients and reroute the inputs and outputs of new nodes. cf. Fig. \ref{fig:Encapsulation-2}.
\begin{figure}[H]
	\centering
	\includegraphics[width=0.8\textwidth, angle=0]{./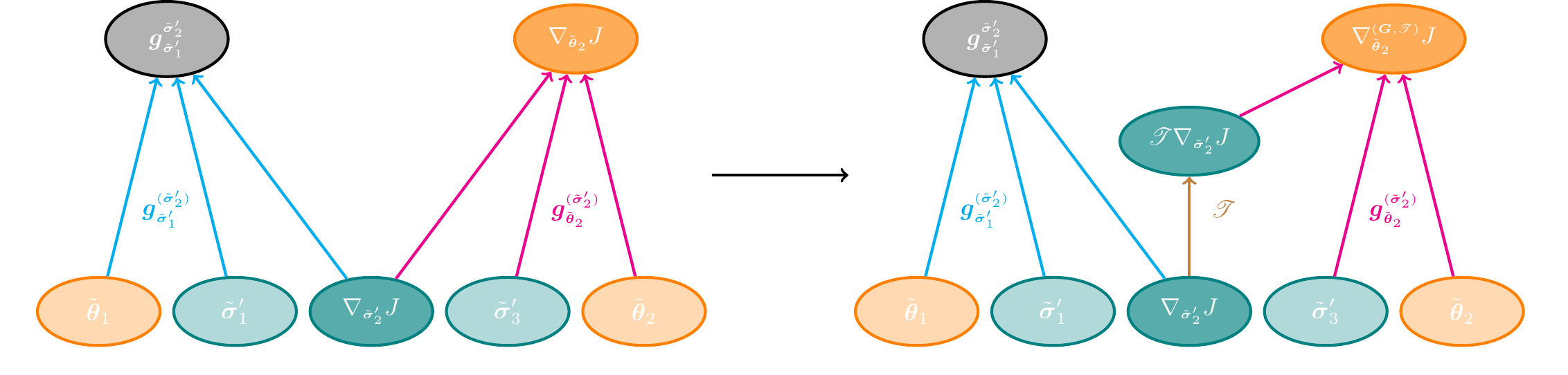}
	\caption{
		Subgraph views of gradients and virtual gradients. \textbf{Left:} the part of the computational graph of the gradients. \textbf{Right:} the part of the computational graph of the virtual gradients.
	}
	\label{fig:Encapsulation-2}
\end{figure}
\section{Convergence Analysis}
\label{sec:Convergence_Analysis}
In this section, we will analyze the theoretical convergence of Eqn.(\ref{first-order-formula-for-theta}) under some assumptions.
\begin{lemma} \label{lemma}
	Let $\boldsymbol{M}$ be a random ($m \times n$)-matrix, $\boldsymbol{M}_{11}, \cdots, \boldsymbol{M}_{ij}, \cdots,\boldsymbol{M}_{mn}$ be an i.i.d. variable from $U(-\infty,+\infty)$. Then
	\begin{equation}
	f_{\lambda} (\boldsymbol{v}, \boldsymbol{u}) := E \left[ \boldsymbol{v}^T \boldsymbol{M}^T \boldsymbol{M} \boldsymbol{u} \big| \ ||\boldsymbol{M}||_2 = \lambda \right] \propto \boldsymbol{v}^T \boldsymbol{u}, \quad \forall \lambda > 0,\forall \boldsymbol{v}, \boldsymbol{u} \in \mathbb{R}^n.
	\end{equation}
\end{lemma}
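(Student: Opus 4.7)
The plan is to exploit the invariance of the conditional law of $\boldsymbol{M}$ under orthogonal transformations. Set
\[
\boldsymbol{A}(\lambda) := E\bigl[\boldsymbol{M}^T \boldsymbol{M} \,\big|\, \|\boldsymbol{M}\|_2 = \lambda\bigr].
\]
Since $f_\lambda(\boldsymbol{v}, \boldsymbol{u}) = \boldsymbol{v}^T \boldsymbol{A}(\lambda) \boldsymbol{u}$, the claim reduces to showing that $\boldsymbol{A}(\lambda) = c(\lambda)\, I_n$ for some scalar $c(\lambda)$, from which $f_\lambda(\boldsymbol{v}, \boldsymbol{u}) = c(\lambda)\, \boldsymbol{v}^T \boldsymbol{u}$ is immediate.

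First I would interpret ``i.i.d.\ from $U(-\infty, +\infty)$'' as Lebesgue reference measure on $\mathbb{R}^{m \times n}$, so that the (improper) joint density of $\boldsymbol{M}$ is constant. Lebesgue measure is invariant under the bi-orthogonal action $\boldsymbol{M} \mapsto \boldsymbol{U} \boldsymbol{M} \boldsymbol{V}^T$ with $\boldsymbol{U} \in O(m)$, $\boldsymbol{V} \in O(n)$, since these transformations have Jacobian determinant $\pm 1$. To make the conditioning on the measure-zero event $\{\|\boldsymbol{M}\|_2 = \lambda\}$ well-defined I would either disintegrate Lebesgue measure along the level sets of the spectral norm, or regularize by taking $\boldsymbol{M}_{ij} \sim N(0, \sigma^2)$ and letting $\sigma \to \infty$; either way the resulting conditional law of $\boldsymbol{M}$ given $\|\boldsymbol{M}\|_2 = \lambda$ is bi-orthogonally invariant, because both the reference measure and the conditioning set are.

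Next, the invariance implies that $\boldsymbol{M} \boldsymbol{V}^T$ and $\boldsymbol{M}$ have the same conditional law for every $\boldsymbol{V} \in O(n)$, hence $\boldsymbol{V} \boldsymbol{M}^T \boldsymbol{M} \boldsymbol{V}^T$ and $\boldsymbol{M}^T \boldsymbol{M}$ are conditionally equidistributed. Taking expectations yields the commutation
\[
\boldsymbol{V}\, \boldsymbol{A}(\lambda)\, \boldsymbol{V}^T = \boldsymbol{A}(\lambda) \qquad \forall\, \boldsymbol{V} \in O(n).
\]
By a standard Schur-type argument (or concretely, by testing the commutation against all permutation matrices and diagonal sign-flip matrices, which already forces off-diagonal entries to vanish and diagonal entries to coincide), $\boldsymbol{A}(\lambda)$ must be a scalar multiple of $I_n$, giving $f_\lambda(\boldsymbol{v}, \boldsymbol{u}) = c(\lambda)\, \boldsymbol{v}^T \boldsymbol{u}$.

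The main obstacle is the first step: giving precise measure-theoretic meaning to an improper uniform prior on $\mathbb{R}^{m \times n}$ together with conditioning on the codimension-one set $\{\|\boldsymbol{M}\|_2 = \lambda\}$. Once one commits to either the disintegration viewpoint or the Gaussian-limit regularization and verifies that the resulting conditional law is indeed bi-orthogonally invariant, the symmetry argument and the Schur step are essentially automatic; the rest of the proof is bookkeeping.
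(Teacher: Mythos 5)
Your proposal is correct and takes essentially the same route as the paper: the paper also reduces the claim to showing that $\boldsymbol{C} = E\left[ \boldsymbol{M}^T \boldsymbol{M} \,\big|\, ||\boldsymbol{M}||_2 = \lambda \right]$ has equal diagonal entries and vanishing off-diagonal entries, which is precisely your statement $\boldsymbol{A}(\lambda) = c(\lambda) I_n$, obtained there from the i.i.d.\ symmetry of the entries (in effect the permutation and sign-flip invariances you note suffice) rather than from the full bi-orthogonal invariance. Your version is in fact more careful than the paper's, which asserts the entrywise identities directly without addressing the improper uniform law or the measure-zero conditioning that you flag and propose to handle by disintegration or a Gaussian limit.
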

\begin{proof}
	Let $\boldsymbol{e}_i$ be the unit vector whose i-th component is 1, $f_{\lambda}$ is bilinear, Then
	\begin{equation}
	f_{\lambda} (\boldsymbol{v}, \boldsymbol{u}) 
	= \sum_{i=1}^{n} \sum_{j=1}^{n} \boldsymbol{v}_i \boldsymbol{u}_j f_{\lambda} (\boldsymbol{e}_i, \boldsymbol{e}_j) 
	= \boldsymbol{v}^T \boldsymbol{C} \boldsymbol{u}, 
	\quad \boldsymbol{C}_{ij} := E \left[ \boldsymbol{e}_i^T \boldsymbol{M}^T \boldsymbol{M} \boldsymbol{e}_j \big| \ ||\boldsymbol{M}||_2 = \lambda \right].
	\label{lemma-proof-1}
	\end{equation}
	Since $\boldsymbol{M}_{11}, \cdots, \boldsymbol{M}_{ij}, \cdots,\boldsymbol{M}_{mn}$ be an i.i.d. variable from $U(-\infty,+\infty)$, the following holds true:
	\begin{equation*}
	\left\{
	\begin{array}{l}
	E [ \boldsymbol{e}_1^T \boldsymbol{M}^T \boldsymbol{M} \boldsymbol{e}_1 | \ ||\boldsymbol{M}||_2 = \lambda ] 
	= \cdots
	= E [ \boldsymbol{e}_n^T \boldsymbol{M}^T \boldsymbol{M} \boldsymbol{e}_n | \ ||\boldsymbol{M}||_2 = \lambda ]
	:= c_0 > 0  \\ 
	E [ \boldsymbol{e}_i^T \boldsymbol{M}^T \boldsymbol{M} \boldsymbol{e}_j | \ ||\boldsymbol{M}||_2 = \lambda ] 
	= 0, \quad  i, j \in \{1, \cdots, n\} \\ 
	\end{array}
	\ \right. .
	\end{equation*}
	Thus:
	\begin{equation}
	f_{\lambda} (\boldsymbol{v}, \boldsymbol{u}) = c_0 \boldsymbol{v}^T \boldsymbol{u}.
	\label{lemma-proof-2}
	\end{equation}
\end{proof}
Fig. \ref{fig:Lemma} proof our lemma.
\begin{figure}[H]
	\centering
	\includegraphics[width=0.5\textwidth, angle=0]{./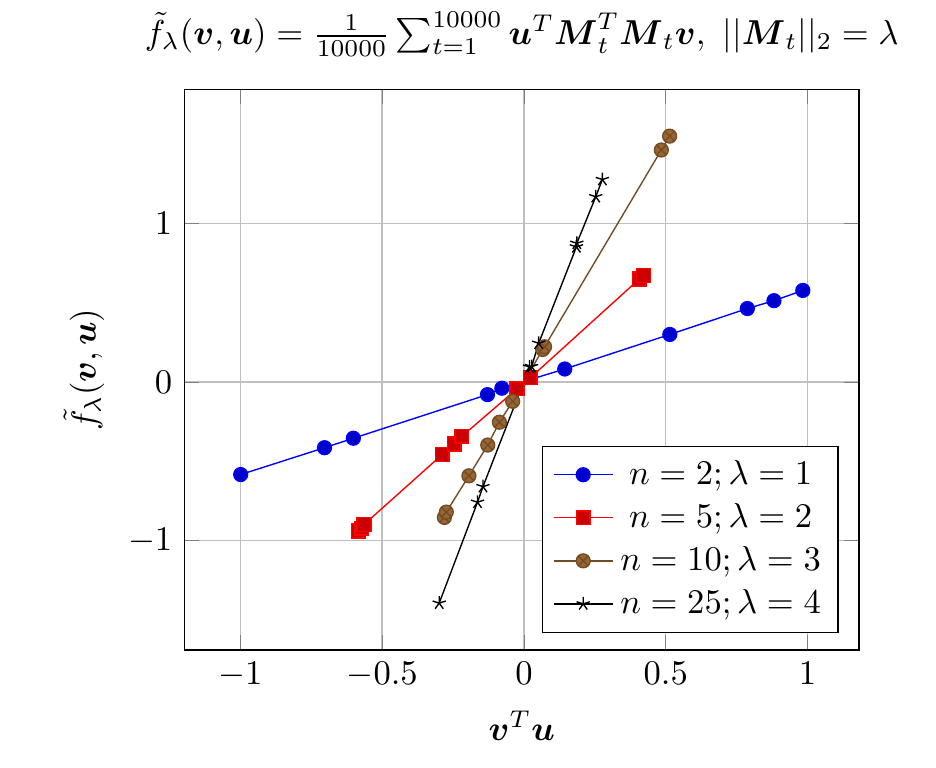}
	\caption{
		The relationship between $\boldsymbol{v}^T \boldsymbol{u}$ and the estimate of $f_{\lambda}(\boldsymbol{v}, \boldsymbol{u})$. Each point corresponds to a pair of random vector $(\boldsymbol{v}, \boldsymbol{u})$ and a random matrix set $\{\boldsymbol{M}_t | t=1, \cdots, 10000 \}$.
	}
	\label{fig:Lemma}
\end{figure}
\begin{corollary} \label{corollary}
	For $\boldsymbol{M}$ defined in Lemma \ref{lemma}, if $\boldsymbol{v}^T \boldsymbol{u} > 0$, then:
	\begin{equation}
	E \left[ \boldsymbol{v}^T \boldsymbol{M}^T \boldsymbol{M} \boldsymbol{u} \right] > 0.
	\end{equation}
\end{corollary}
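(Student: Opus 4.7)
The plan is to deduce the corollary from the lemma by the tower property of conditional expectation, conditioning on the spectral norm $\lambda = \lVert \boldsymbol{M}\rVert_2$. Specifically, writing
\begin{equation*}
E\bigl[\boldsymbol{v}^T \boldsymbol{M}^T \boldsymbol{M} \boldsymbol{u}\bigr]
= E_{\lambda}\bigl[\, E[\boldsymbol{v}^T \boldsymbol{M}^T \boldsymbol{M} \boldsymbol{u} \mid \lVert\boldsymbol{M}\rVert_2 = \lambda]\,\bigr],
\end{equation*}
the inner expectation equals $c_0(\lambda)\,\boldsymbol{v}^T\boldsymbol{u}$ by Lemma \ref{lemma}, where $c_0(\lambda)>0$ for every $\lambda>0$. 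Pulling the deterministic factor $\boldsymbol{v}^T\boldsymbol{u}$ outside then gives
\begin{equation*}
E\bigl[\boldsymbol{v}^T \boldsymbol{M}^T \boldsymbol{M} \boldsymbol{u}\bigr]
= \bigl(\boldsymbol{v}^T\boldsymbol{u}\bigr)\cdot E_{\lambda}\bigl[c_0(\lambda)\bigr].
\end{equation*}

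The next step is to observe that $E_\lambda[c_0(\lambda)]>0$. Because the entries of $\boldsymbol{M}$ are i.i.d.\ and continuous, the event $\lVert\boldsymbol{M}\rVert_2=0$ has probability zero, so $\lambda>0$ almost surely; combined with $c_0(\lambda)>0$ on $\{\lambda>0\}$, the expectation of $c_0(\lambda)$ is strictly positive. Combining this with the hypothesis $\boldsymbol{v}^T\boldsymbol{u}>0$ yields the desired strict inequality $E[\boldsymbol{v}^T \boldsymbol{M}^T \boldsymbol{M} \boldsymbol{u}]>0$.

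The only real obstacle is interpretive rather than technical: the ``distribution'' $U(-\infty,+\infty)$ used in Lemma \ref{lemma} is improper, so one should read the hypothesis as requiring the entries to be i.i.d.\ from a symmetric, absolutely continuous distribution for which the relevant expectations exist, and the proof here inherits whatever conditioning framework was used there. Under that reading, the whole argument is essentially one application of the tower property plus positivity of $c_0$, so no further machinery is needed.
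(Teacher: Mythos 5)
Your argument is correct and is essentially the paper's intended one: the paper states the corollary without proof as an immediate consequence of Lemma \ref{lemma}, and integrating the conditional expectation $c_0(\lambda)\,\boldsymbol{v}^T\boldsymbol{u}$ over the law of $\lambda=\lVert\boldsymbol{M}\rVert_2$ via the tower property is exactly that step. Your added remarks (that $\lambda>0$ almost surely and that the improper ``$U(-\infty,+\infty)$'' must be read as a proper symmetric continuous law for the expectations to make sense) are sensible clarifications of the paper's setup rather than a different route.
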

\begin{theorem} \label{theorem}
	Let $F$ and $\boldsymbol{f}$ be second-order differentiable functions with random variables in their expression, we set:
	\begin{equation*}
	\boldsymbol{v}^T \mathscr{T} \boldsymbol{v} > 0,\quad 
	i \in {1, \cdots, m},\ \forall \boldsymbol{v} \in \mathbb{R}^{m} \backslash \{ \boldsymbol{0} \}.
	\end{equation*}
	If each component of Jacobian matrix $\boldsymbol{M} = \left( \frac{\partial \boldsymbol{f}_i(\boldsymbol{\theta})}{\partial \boldsymbol{\theta}_j} \right)_{m \times n}$ is an i.i.d. variable from $U(-\infty,+\infty)$,
	then, for $\boldsymbol{\theta}^{(t+1)} 
	= \boldsymbol{\theta}^{(t)} - \alpha * \boldsymbol{M} \mathscr{T} \nabla_{\boldsymbol{f}(\boldsymbol{\theta})} F(\boldsymbol{f}(\boldsymbol{\theta}))|_{\boldsymbol{\theta} = \boldsymbol{\theta}^{(t)}}$ and $\nabla_{\boldsymbol{\theta}} F(\boldsymbol{f}(\boldsymbol{\theta})|_{\boldsymbol{\theta} = \boldsymbol{\theta}^{(t)}} \ne \boldsymbol{0}$  there exists a $\alpha > 0$  such that
	\begin{equation*}
	E \left[ F(\boldsymbol{f}(\boldsymbol{\theta}^{(t+1)})) - F(\boldsymbol{f}(\boldsymbol{\theta}^{(t)})) \right] < 0,
	\end{equation*}
\end{theorem}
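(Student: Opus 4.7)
The plan is to Taylor-expand $F(\boldsymbol{f}(\boldsymbol{\theta}^{(t+1)}))$ about $\boldsymbol{\theta}^{(t)}$ in the step $\Delta\boldsymbol{\theta} := \boldsymbol{\theta}^{(t+1)} - \boldsymbol{\theta}^{(t)}$, identify the first-order term as something whose expectation is controlled by the Corollary, and then choose $\alpha$ small enough so that the $O(\alpha^2)$ remainder is dominated. The statement as written has $\boldsymbol{M}\mathscr{T}\nabla_{\boldsymbol{f}}F$ in the update, but for dimensional consistency with Eqn.(\ref{first-order-formula-for-theta}) this must be read as $\boldsymbol{M}^T\mathscr{T}\nabla_{\boldsymbol{\sigma}}F$; I would work with that form.

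First I would apply the chain rule to get $\nabla_{\boldsymbol{\theta}}(F\circ\boldsymbol{f}) = \boldsymbol{M}^T \nabla_{\boldsymbol{\sigma}} F$, and then Taylor-expand:
\begin{equation*}
F(\boldsymbol{f}(\boldsymbol{\theta}^{(t+1)})) - F(\boldsymbol{f}(\boldsymbol{\theta}^{(t)})) = \bigl(\boldsymbol{M}^T\nabla_{\boldsymbol{\sigma}}F\bigr)^T \Delta\boldsymbol{\theta} + R,
\end{equation*}
where $R = O(\|\Delta\boldsymbol{\theta}\|^2) = O(\alpha^2)$ by the assumed boundedness of the second derivatives of $F\circ\boldsymbol{f}$. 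Substituting $\Delta\boldsymbol{\theta} = -\alpha \boldsymbol{M}^T\mathscr{T}\nabla_{\boldsymbol{\sigma}}F$, the first-order contribution becomes
\begin{equation*}
-\alpha\, (\nabla_{\boldsymbol{\sigma}}F)^T \boldsymbol{M}\boldsymbol{M}^T \mathscr{T}\nabla_{\boldsymbol{\sigma}}F.
\end{equation*}
Now I would invoke Corollary \ref{corollary} with $\boldsymbol{v}=\nabla_{\boldsymbol{\sigma}}F$ and $\boldsymbol{u}=\mathscr{T}\nabla_{\boldsymbol{\sigma}}F$: the positive-definiteness hypothesis $\boldsymbol{v}^T\mathscr{T}\boldsymbol{v}>0$ together with the fact that $\nabla_{\boldsymbol{\theta}}(F\circ\boldsymbol{f})\ne\boldsymbol{0}$ forces $\nabla_{\boldsymbol{\sigma}}F\ne\boldsymbol{0}$, hence $\boldsymbol{v}^T\boldsymbol{u}>0$. (The lemma as stated is for $\boldsymbol{M}^T\boldsymbol{M}$, but since the i.i.d.\ assumption is invariant under transposition, the same conclusion holds for $\boldsymbol{M}\boldsymbol{M}^T$ with $m$ and $n$ interchanged.) Taking expectation yields $E[(\nabla_{\boldsymbol{\sigma}}F)^T \boldsymbol{M}\boldsymbol{M}^T \mathscr{T}\nabla_{\boldsymbol{\sigma}}F] =: C > 0$, so $E[F(\boldsymbol{f}(\boldsymbol{\theta}^{(t+1)})) - F(\boldsymbol{f}(\boldsymbol{\theta}^{(t)}))] = -\alpha C + O(\alpha^2)$, which is strictly negative for all sufficiently small $\alpha>0$.

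The main obstacle is the handling of the $O(\alpha^2)$ remainder under the $U(-\infty,+\infty)$ assumption. That distribution is not a proper probability measure on $\mathbb{R}$, so the expectation of $R$, which depends quadratically on the entries of $\boldsymbol{M}$ via $\|\Delta\boldsymbol{\theta}\|^2$, is formally ill-defined. To make the argument rigorous one would either (i) condition on $\|\boldsymbol{M}\|_2=\lambda$ as in the Lemma and then take a suitable limit, or (ii) replace $U(-\infty,+\infty)$ with any symmetric i.i.d.\ distribution with finite second moment, in which case the Lemma's conclusion still holds (with $c_0$ the common variance) and $E[R]$ is genuinely $O(\alpha^2)$ under the bounded-Hessian assumption. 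Beyond this measure-theoretic subtlety, the rest of the proof is a standard "descent lemma" argument and requires no further input.
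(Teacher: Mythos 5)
Your proof follows essentially the same route as the paper's: a first-order Taylor expansion of $F\circ\boldsymbol{f}$ along the update step, reduction of the leading term to $-\alpha\,E\left[\boldsymbol{v}^T\boldsymbol{M}\boldsymbol{M}^T\mathscr{T}\boldsymbol{v}\right]$ with $\boldsymbol{v}=\nabla_{\boldsymbol{\sigma}}J$, an appeal to Corollary \ref{corollary}, and then choosing $\alpha$ small enough that the higher-order remainder is dominated. Your reading of the update with $\boldsymbol{M}^T$ (for dimensional consistency with Eqn.(\ref{first-order-formula-for-theta})) and your caveat about the improper law $U(-\infty,+\infty)$ and the expectation of the remainder only make explicit points the paper's own proof glosses over; the argument itself is the same.
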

\begin{proof}
	Without loss of generality, we can assume $\boldsymbol{\theta} \in \mathbb{R}^n, \boldsymbol{\sigma} = \boldsymbol{f}(\boldsymbol{\theta}) \in \mathbb{R}^m, m > n$. Then, the Maclaurin series for $F(\boldsymbol{f}(\boldsymbol{\theta})))$ around the point $\boldsymbol{\theta}^{(t)}$ is:
		\begin{align*}
	F(\boldsymbol{f}(\boldsymbol{\theta}^{(t+1)})) - F(\boldsymbol{f}(\boldsymbol{\theta}^{(t)})) 
	&= -\alpha (\nabla_{\boldsymbol{\theta}} J|_{\boldsymbol{\theta} = \boldsymbol{\theta}^{(t)}})^T
	\boldsymbol{M} \mathscr{T} \nabla_{\boldsymbol{\sigma}} J|_{\boldsymbol{\theta} = \boldsymbol{\theta}^{(t)}} + o(\alpha) \\
	&= -\alpha (\nabla_{\boldsymbol{\sigma}} J|_{\boldsymbol{\theta} = \boldsymbol{\theta}^{(t)}})^T
	\boldsymbol{M}^T \boldsymbol{M} \mathscr{T} \nabla_{\boldsymbol{\sigma}} J|_{\boldsymbol{\theta} = \boldsymbol{\theta}^{(t)}} + o(\alpha).
	\end{align*}
	Let $\boldsymbol{v} = \nabla_{\boldsymbol{\sigma}} J |_{\boldsymbol{\theta} = \boldsymbol{\theta}^{(t)}}$. According to corollary \ref{corollary}:
	\begin{equation*}
	E \left[ F(\boldsymbol{f}(\boldsymbol{\theta}^{(t+1)})) - F(\boldsymbol{f}(\boldsymbol{\theta}^{(t)})) \right] = - \alpha E \left[ \boldsymbol{v}^T \boldsymbol{M}^T \boldsymbol{M} \mathscr{T} \boldsymbol{v} \right] + o(\alpha) < 0.
	\end{equation*}
\end{proof}
Although our convergence analysis in Thm.\ref{theorem} only applies to the assumption of uniform distribution, we empirically found that SVGD often outperforms other methods in general cases. 
\section{Related Work}
\label{sec:Related_Work}
\textbf{First-order methods.} For general first-order methods, The moving direction  $\boldsymbol{p}^{(t)}$ of the variables can be regarded as the function of the stochastic gradient $\boldsymbol{g}^{(t)}$:
\begin{itemize}
	\item \textbf{SGD:}~~~~$\boldsymbol{p}^{(t)} = \mathscr{T}\boldsymbol{g}^{(t)} := -\boldsymbol{g}^{(t)}$.
	\item \textbf{Momentum:}\cite{polyak1964some}~~~~Let $\boldsymbol{m}^{(0)} = \boldsymbol{0}, \boldsymbol{m}^{(t+1)} = c\ \boldsymbol{m}^{(t)} + \boldsymbol{g}^{(t)}$. Then:
	\begin{equation*}
		\boldsymbol{p}^{(t)} = \mathscr{T}\boldsymbol{g}^{(t)} := -\boldsymbol{m}^{(t+1)}.
	\end{equation*}
	\item \textbf{RMSProp:}~~~~Let $\boldsymbol{r}^{(0)} = \boldsymbol{0}, \boldsymbol{r}^{(t+1)} = \rho\ \boldsymbol{r}^{(t)} + (1 - \rho)\ \boldsymbol{g}^{(t)} \odot \boldsymbol{g}^{(t)}$. Then:
	\begin{equation*}
		\boldsymbol{p}^{(t)} = \mathscr{T}\boldsymbol{g}^{(t)} := - \frac{\boldsymbol{g}^{(t)}}{\sqrt{\delta + \boldsymbol{r}^{(t+1)}}}.
	\end{equation*}
	\item \textbf{Adam:}~~~~Let $\boldsymbol{s}^{(0)} = \boldsymbol{r}^{(0)} = \boldsymbol{0}, \boldsymbol{s}^{(t+1)} = \rho_1\ \boldsymbol{s}^{(t)} + (1 - \rho_1)\ \boldsymbol{g}^{(t)}, \boldsymbol{r}^{(t+1)} = \rho_2\ \boldsymbol{r}^{(t)} + (1 - \rho_2)\ \boldsymbol{g}^{(t)} \odot \boldsymbol{g}^{(t)}$. Then:
	\begin{equation*}
		\boldsymbol{p}^{(t)} = \mathscr{T}\boldsymbol{g}^{(t)} := -\frac{\boldsymbol{s}^{(t+1)}/(1 - \rho_1^t)}{\sqrt{\delta + \boldsymbol{r}^{(t+1)}/(1 - \rho_2^t)}}.
	\end{equation*}
\end{itemize}
However, in SVGD method, $\boldsymbol{p}^{(t)} = -\nabla_{\boldsymbol{\theta}}^{(\boldsymbol{G}, \mathscr{T})} J$ cannot be written as a function of $\boldsymbol{g}^{(t)}$. Thus, SVGD is not essentially a first-order method.

\textbf{Global minimum.} A central challenge of non-convex optimization is avoiding sub-optimal local minima. Although it has been shown that the variable can sometimes converges to a neighborhood of the global minimum by adding noise\cite{neelakantan2015neural,neelakantan2015adding,kurach2015neural,kaiser2015neural,zeyer2017comprehensive}, the convergence rate is still a problem. Note that the DP method has some probability to escape “appropriately shallow” local minima because the moving direction of the variable is generated by solving several sub-problems instead of the original problem. We use computational graph and automatic differentiation to generate the sub-problems in DP, such as what we did in the SVGD method.
\section{Experiments}
\label{sec:Experiments}
In this section, we evaluated our method on two benchmark datasets using several different neural network architectures. We train the neural networks using RMSProp, Adam, SGD, and SVGD to minimize the cross-entropy objective function with $L_1$ weight decay on the parameters to prevent over-fitting. To be fair, for different methods, a given objective function will be minimized with different learning rates. All extension libs, algorithm, and experimental logs in this paper can be found at the URL: \url{https://github.com/LizhengMathAi/svgd}. 

The following experiments show that SVGD has a relatively faster convergence rate and better test accuracy than SGD, RMSProp, and Adam. 

\subsection{Multi-layer neural network}
In our first set of experiments, we train a 5-layer neural network (Fig. \ref{fig:MLP}) on the MNIST \cite{deng2012mnist} handwritten digit classification task.
\begin{figure}[H]
	\centering
	\includegraphics[width=0.85\textwidth, angle=0]{./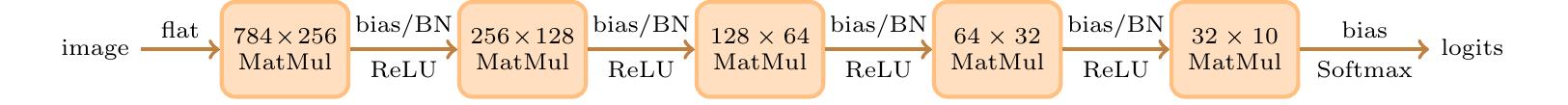}
	\caption{MLP architecture for MNIST with 5 parameter layers (245482 params).}
	\label{fig:MLP}
\end{figure}
The model is trained with a mini-batch size of 32 and weight decay of $1.0 \times 10^{-4}$. In Table \ref{tab:MNIST-MLP}, we decay $\alpha$ at 1.6k and 3.6k iterations and summarize the optimal learning rates for RMSProp, Adam, SGD, and SVGD by hundreds of experiments.
\begin{table}[H]
	\centering
	\begin{tabular}{l|l|p{15mm}|p{15mm}|p{15mm}|p{20mm}}
		\hline
		\multicolumn{2}{l|}{ } & \textbf{RMSProp} & \textbf{Adam} & \textbf{SGD} & \textbf{SVGD}(s=0.1) \\ 
		\hline
		\multirow{3}{*}{$\alpha$} & \textbf{iter:} $[0\sim1599]$ & 0.001 & 0.001 & 0.1 & 0.01 \\
		& \textbf{iter:} $[1600\sim3599]$ & 0.0005 & 0.00005 & 0.05 & 0.005 \\
		& \textbf{iter:} $[1600\sim5999]$ & 0.00005 & 0.00005 & 0.01 & 0.001 \\
		\hline
		\multicolumn{2}{c|}{\textbf{test top-1 error}} & 1.80\% & 1.94\% & 1.76\% & 1.60\% \\
		\hline
	\end{tabular}
	\caption{The test error and learning rates in \textbf{MLP} experiments.}
	\label{tab:MNIST-MLP}
\end{table}
In Table \ref{tab:MNIST-MLP} and Fig. \ref{fig:MNIST-MLP} we compare the error rates and their descent process process on the CIFAR-10 test set, respectively.
\begin{figure}[H]
	\centering
	\includegraphics[width=0.85\textwidth, angle=0]{./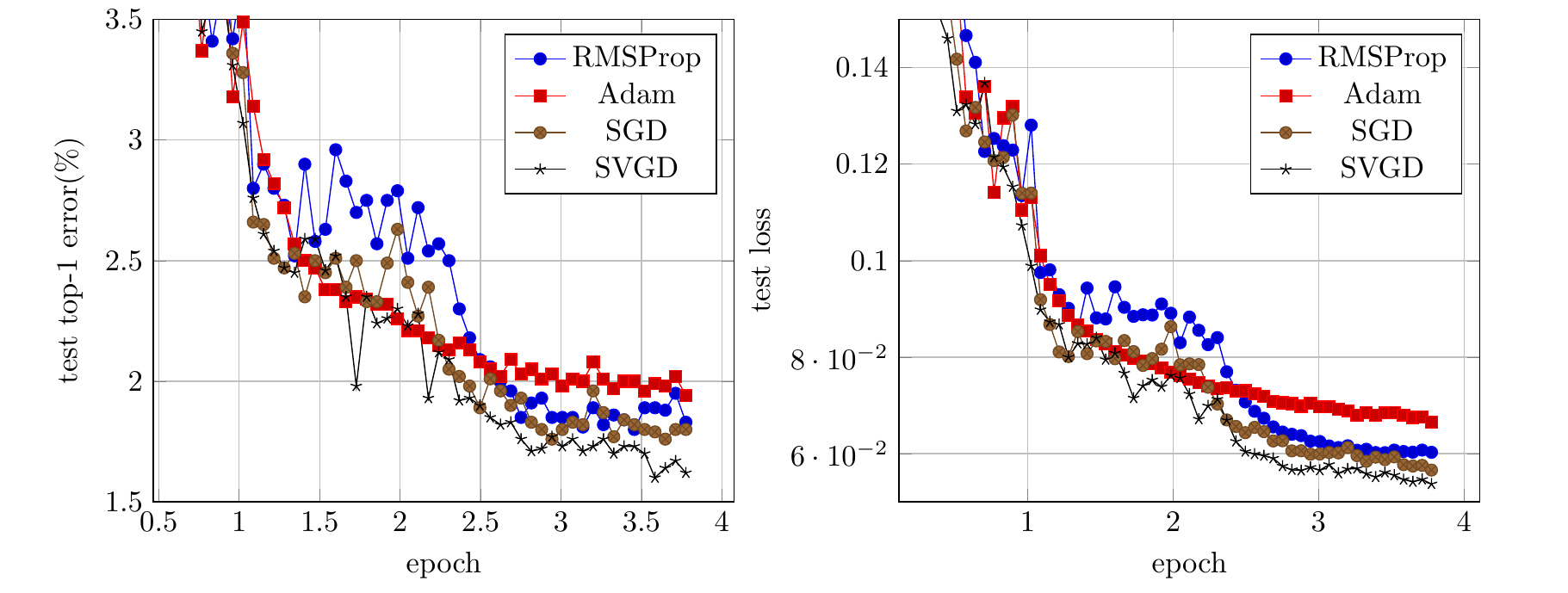}
	\caption{Comparison of first-order methods on \textbf{MNIST} digit classification for 3.75 epochs.}
	\label{fig:MNIST-MLP}
\end{figure}
\subsection{Convolutional neural network}
We train a VGG model (Fig. \ref{fig:VGG}) on the CIFAR-10 \cite{krizhevsky2009learning} classification task and follow the simple data augmentation in \cite{lee2015deeply,he2016deep} for training and evaluate the original image for testing.
\begin{figure}[H]
	\centering
	\includegraphics[width=0.85\textwidth, angle=0]{./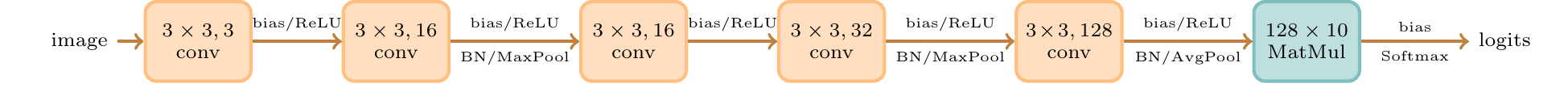}
	\caption{VGG model architecture for \textbf{CIFAR-10} with 6 parameter layers (46126 params).}
	\label{fig:VGG}
\end{figure}
The model is trained with a mini-batch size of 128 and weight decay of $1.0 \times 10^{-5}$. In Table \ref{tab:CIFAR10-VGG}, we decay $\alpha$ at 12k and 24k iterations and summarize the optimal learning rates for RMSProp, Adam, SGD, and SVGD by hundreds of experiments.
\begin{table}[H]
	\centering
	\begin{tabular}{l|l|p{15mm}|p{15mm}|p{15mm}|p{23mm}}
		\hline
		\multicolumn{2}{l|}{ } & \textbf{RMSProp} & \textbf{Adam} & \textbf{SGD} & \textbf{SVGD}(s=0.001) \\ 
		\hline
		\multirow{3}{*}{$\alpha$} & \textbf{iter:} $[0\sim11999]$ & 0.02 & 0.02 & 2.0 & 2.0 \\
		& \textbf{iter:} $[12000\sim23999]$ & 0.01 & 0.01 & 0.5 & 0.5 \\
		& \textbf{iter:} $[24000\sim34999]$ & 0.002 & 0.005 & 0.005 & 0.005 \\
		\hline
		\multicolumn{2}{c|}{\textbf{test top-1 error}} & 17.78\% & 18.02\% & 17.32\% & 17.07\% \\
		\hline
	\end{tabular}
	\caption{The test error and learning rates in \textbf{VGG} experiment.}
	\label{tab:CIFAR10-VGG}
\end{table}
In Table \ref{tab:CIFAR10-VGG} and Fig. \ref{fig:CIFAR10-VGG} we compare the error rates and their descent process on the CIFAR-10 test set, respectively.
\begin{figure}[H]
	\centering
	\includegraphics[width=0.85\textwidth, angle=0]{./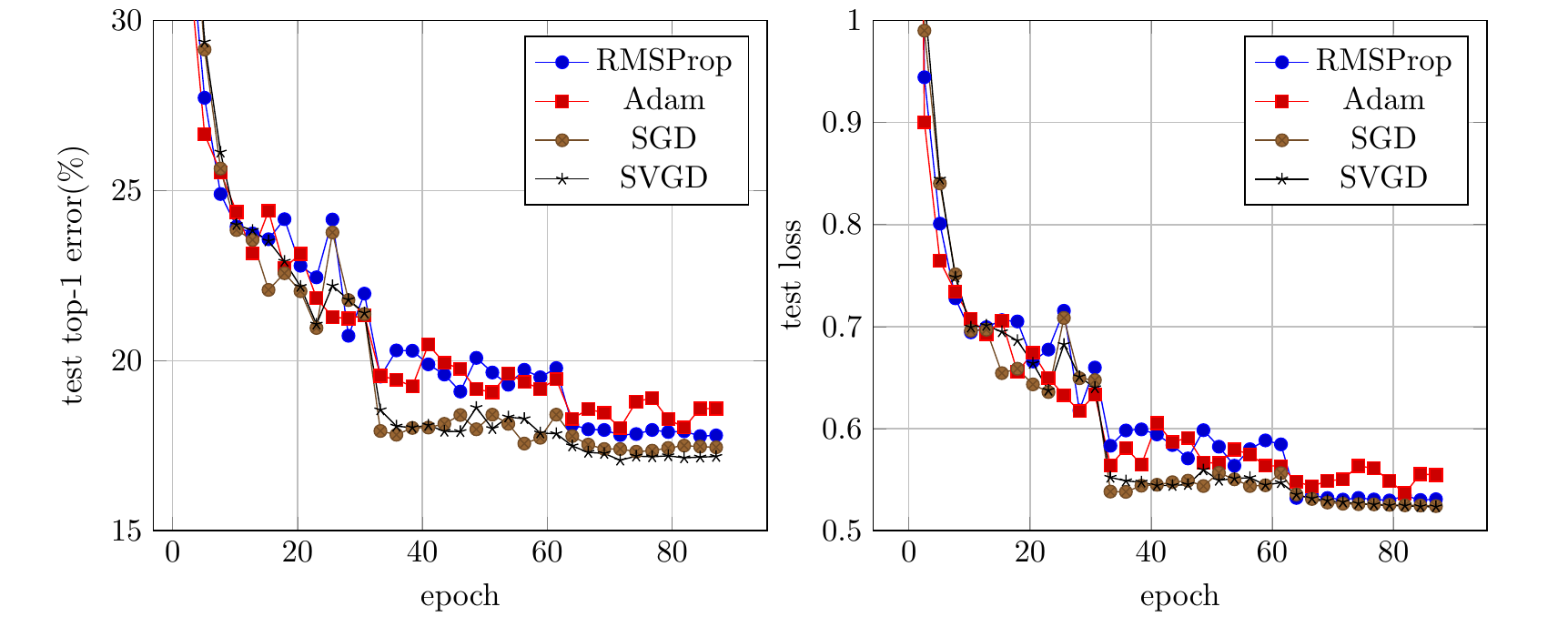}
	\caption{Comparison of first-order methods on \textbf{CIFAR-10} dataset for 90 epochs.}
	\label{fig:CIFAR10-VGG}
\end{figure}
\subsection{Deep neural network}
We use the same hyperparameters with \cite{he2016deep} to train ResNet-20 model(0.27M params) on the CIFAR-10 classification task. In Table \ref{tab:CIFAR10-ResNet}, we decay $\alpha$ at 12k and 24k iterations and summarize the optimal learning rates for RMSProp, Adam, SGD, and SVGD by hundreds of experiments.
\begin{table}[H]
	\centering
	\begin{tabular}{l|l|p{15mm}|p{15mm}|p{15mm}|p{22mm}}
		\hline
		\multicolumn{2}{l|}{ } & \textbf{RMSProp} & \textbf{Adam} & \textbf{SGD} & \textbf{SVGD}(s=0.01) \\ 
		\hline
		\multirow{3}{*}{$\alpha$} & \textbf{iter:} $[0\sim31999]$ & 0.001 & 0.001 & 0.1 & 0.5 \\
		& \textbf{iter:} $[32000\sim41999]$ & 0.0001 & 0.0001 & 0.01 & 0.02 \\
		& \textbf{iter:} $[42000\sim49999]$ & 0.0001 & 0.00005 & 0.001 & 0.01 \\
		\hline
		\multicolumn{2}{c|}{\textbf{test top-1 error}} & 11.18\% & 11.12\% & 10.69\% & 8.62\% \\
		\hline
	\end{tabular}
	\caption{The test error and learning rates in \textbf{ResNet} experiments.}
	\label{tab:CIFAR10-ResNet}
\end{table}
In Table \ref{tab:CIFAR10-ResNet} and Fig. \ref{fig:CIFAR10-ResNet} we compare the error rates and their descent process on the CIFAR-10 test set, respectively. The top-1 error fluctuations in experiments do not exceed 1\%. See \cite{srivastava2015training} for more information on the CIFAR-10 experimental record.
\begin{figure}[H]
	\centering
	\includegraphics[width=0.85\textwidth, angle=0]{./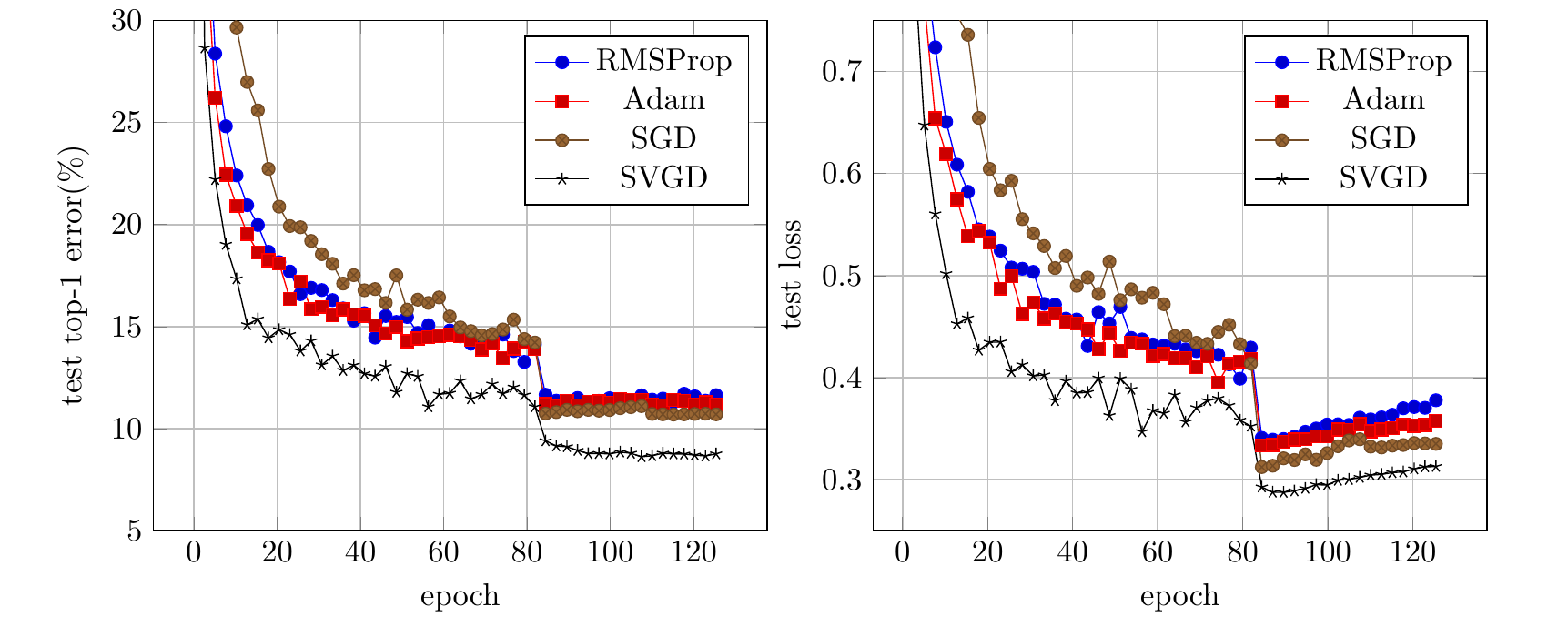}
	\caption{Comparison of first-order methods on \textbf{CIFAR-10} dataset for 125 epochs.}
	\label{fig:CIFAR10-ResNet}
\end{figure}

\bibliographystyle{unsrt}  
\bibliography{references} 


\end{document}